\documentclass[letterpaper,11pt,leqno]{article}
\usepackage{paper}
\hypersetup{pdftitle={MarketBench: Evaluating AI Agents as Market Participants}}

\newcommand{\bib}{paper.bib}

\begin{document}

\title{MarketBench: Evaluating AI Agents as Market Participants}

\author{Andrey Fradkin, Rohit Krishnan
%
\thanks{Andrey Fradkin: Boston University and the MIT Initiative on the Digital Economy. Rohit Krishnan: Independent Researcher.}}

\date{\today}   


\begin{titlepage}
\maketitle
Markets are a promising way to coordinate AI agent activity for similar reasons to those used to justify markets more broadly. In order to effectively participate in markets, agents need to have informative signals of their own ability to successfully complete a task and the cost of doing so. We propose MarketBench, a benchmark for assessing whether AI agents have these capabilities. We use a 93-task subset of SWE-bench Lite, a software engineering benchmark, with six recently released LLMs as a demonstration. These LLMs are miscalibrated on both success probability and token usage, and auctions built from these self-reports diverge from a full-information allocation. A follow-up intervention where we add information about capabilities from prior experiments to the context improves calibration, but only modestly narrows the gap to a full-information benchmark. We also document the performance of a market-based scaffolding with these LLMs. Our results point to self-assessment as a key bottleneck for market-style coordination of AI agents.

\end{titlepage}

\section{Introduction}\label{s:introduction}
 
Agent benchmarks mostly ask whether a model can finish a task. Under market-style rules, agents typically name a price and convey how likely they are to succeed \emph{before} a task is assigned, so allocation turns on self-reported odds and expected cost, not only on whether a run ultimately succeeds. We propose MarketBench, a benchmark for this self-assessment on realistic software-engineering tasks. Our initial instantiation uses 93 SWE-bench Lite tasks and six recently released LLMs. Models have weak self-assessment: current models are poor at forecasting both their success probability and their token usage, and those errors carry over into auction outcomes built from the same self-reports. 



Why focus on this capability? Multi-agent systems are quickly becoming the standard way to do complex work through tools such as Claude Code and OpenClaw. However, these systems are complex and have ad-hoc rules for determining when and which types of agents are used across situations. In contrast, markets simplify complexity through the price mechanism. They are attractive for coordinating heterogeneous agents when task-specific fit, cost, or speed are local to the worker rather than visible to a central planner. But this logic only works if workers can say something informative about their own likely performance. Without calibrated self-assessment, bids are noisy from the start and downstream allocation suffers.\footnote{This difficulty is compounded by what \citet{dellacqua2023jagged} call the ``jagged technological frontier'': AI capability boundaries are irregular and unpredictable, with models excelling at some tasks while failing at others of seemingly similar difficulty.}

One might object that there is little genuinely private information across models: frontier systems are few in number and the operator can in principle observe them directly. That objection has two responses. Narrowly, heterogeneity in realized performance and cost reflects high-dimensional, task-specific model behavior; even if the architecture is public, the implied mapping from a task to success probability and token use is not fully predictable ex ante. More broadly, agents are bundles of a base model, execution environment, scaffolding, and operator-provided state; when the operator differs from the end user or principal, information about fit and cost is genuinely decentralized relative to a central market maker.

MarketBench is based on SWE-bench Lite software-engineering tasks, and we intend to extend it to other tasks. In its current form, MarketBench consists of two task families:
\begin{enumerate}
	\item MarketBench Calibration: We ask agents to forecast whether they can complete a task in one attempt and the total token usage they expect that attempt to consume. We then compare those reports to realized model runs and map the token forecasts into expected dollar costs using model-specific pricing.
	\item MarketBench Auction: We use the previously elicited success and token estimates to derive bids for a reserve-price procurement simulation and measure the resulting payoffs relative to full information about capabilities and costs.
\end{enumerate}

Separately, we present an illustrative scaffold experiment that elicits beliefs to route work across agents. We find that the scaffold benefits from using a diversity of agents, but is limited by the weak self-assessment of these agents. We treat this scaffold as a prototype of a system that can be improved in the future when agents improve their self-assessment capabilities.

The rest of this paper proceeds as follows. Section~\ref{sec:related} reviews related work on multi-agent coordination, LLM self-evaluation, and market mechanisms for computational resource allocation. Section~\ref{sec:model} presents a simple model of market-based task allocation among heterogeneous AI agents and derives the conditions under which market coordination outperforms fixed non-market routing rules. Section~\ref{sec:benchmark} introduces MarketBench, describing the task sets, bidding protocols, evaluation metrics, and benchmark results. Section~\ref{sec:scaffold} then presents our illustrative live scaffold experiment together with its empirical results. Section~\ref{sec:discussion} discusses the implications for AI agent design, hybrid market--AI coordination, and the path toward market-ready agents.

\section{Related Work}\label{sec:related}

A growing literature proposes multi-agent LLM systems that decompose work across role-specialized agents and coordinate through explicit prompting, message passing, and hand-designed workflows \citep{wu2023autogen,hong2023metagpt,li2023camel}. Commercial coding agents similarly rely on centralized scaffolds that route subtasks, manage tool use, and implement retry logic \citep{anthropic2025claudecode,openai2025codex}. More dynamic orchestrators replace static chains with planner--scheduler architectures, but the core paradigm remains hierarchical: a central controller must decide decomposition, assignment, and failure recovery \citep{song2025gradientsys,tomasev2026intelligent}. Related work also explores market-inspired routing rules. For example, \citet{alazraki2026sale} uses ``strategy auctions'' to score agent plans and route work among heterogeneous models, but this is not a literal market mechanism in our sense because bids do not arise from agents pricing their own expected success and cost under an incentive-compatible allocation rule. Our focus differs in asking whether coordination can be decentralized through market mechanisms, taking inspiration from how markets have been used at scale in society, and what agent capabilities such mechanisms presuppose.

The idea of bid-based allocation in multi-agent systems predates LLMs. Classical distributed AI introduced contracting and bidding protocols for task allocation, and the computer-systems literature developed market-based approaches to allocate fungible computational resources. Market-oriented programming, for example, computes allocations via competitive-equilibrium prices in artificial economies \citep{wellman1993market}. Those mechanisms work when agents can price their own costs and constraints accurately. In our setting, the traded object is heterogeneous cognitive labor whose quality and cost depend on model, context, and tools, so meaningful bids require metacognitive self-assessment.

A second, related, literature evaluates frontier agents on complex tasks. A popular example is METR's time-horizon methodology, which characterizes how long an agent can sustain autonomous work at a given reliability level \citep{kwa2025measuring}. GDPval measures performance on economically meaningful tasks mapped to real occupational work activities \citep{patwardhan2025gdpval}. The Holistic Agent Leaderboard (HAL) contributes evaluation infrastructure that standardizes comparisons across models, scaffolds, and benchmarks, while using large-scale rollout logs to surface behavioral failure modes that summary scores can miss \citep{kapoor2025hal}. Our closest reference is \citet{rabanser2026reliability}, which argues that single success metrics obscure important dimensions of agent performance and proposes a broader reliability profile spanning consistency, robustness, predictability, and safety. That paper also elicits the model's confidence that its answer is correct and evaluates those forecasts using the Brier score, as we do. Nonetheless, our elicitation target is different: we ask for ex ante beliefs about task-level success and expected cost, rather than confidence in the correctness of a realized answer. Together these benchmarks advance measurement beyond raw task success toward the operational properties that determine whether agents can be trusted in real deployments.

This connects naturally to work on LLM confidence elicitation and calibration. Numerous papers document miscalibration and elicitation sensitivity \citep{xiong2024uncertainty,geng2024survey,zhang2024fidelity}. We extend this agenda from answer-level confidence to agentic production: market participation requires calibrated beliefs about task completion and expected cost, beyond correctness alone. Finally, our motivation is grounded in classic arguments for decentralized coordination. Hayek emphasizes that prices aggregate dispersed local knowledge \citep{hayek1945}.

\section{Conceptual Framework}\label{sec:model}

In this section, we walk through a stylized model in which the value of a market mechanism for a principal looking to complete a task comes from task-specific private information. We consider this framework an intuition pump, rather than a full model of a market-based scaffolding.

A principal receives value $v>0$ if a task is completed. There are two agents, $H$ and $L$. Agent $H$ is the ``better'' agent on average but is also more expensive: it has higher baseline capability $a_H>a_L$ and higher cost $c_H>c_L>0$. Each task has difficulty $d$. For agent $i\in\{H,L\}$, realized task fit is $a_i+\varepsilon_i$, where $\varepsilon_i$ is an idiosyncratic draw. The agent succeeds if $a_i+\varepsilon_i\geq d$. If $F$ denotes the cumulative distribution function of $\varepsilon_i$, then the ex ante completion probability of agent $i$ on a task of difficulty $d$ is
\[
p_i(d)=1-F(d-a_i),
\]
so $p_H(d) \geq p_L(d)$ for every $d$.

Consider first simple non-market routing rules that ignore the task-specific draw. If the principal assigns the task only to the better agent, the expected value net of inference cost is
\[
W_H(d)=v p_H(d)-c_H,
\]
and the expected cost is
\[
C_H(d)=c_H.
\]
If instead the principal assigns the task only to the worse agent, the corresponding objects are
\[
W_L(d)=v p_L(d)-c_L,
\]
and
\[
C_L(d)=c_L.
\]
The better agent is preferred to the worse one when
\[
v\left[p_H(d)-p_L(d)\right]>c_H-c_L.
\]

If the principal runs both agents in parallel and takes a successful output from either one, then under independence of the two completion events the probability of success is
\[
Q_P(d)=1-\left(1-p_H(d)\right)\left(1-p_L(d)\right),
\]
so expected value and cost are
\[
W_P(d)=v Q_P(d)-c_H-c_L
\]
and
\[
C_P(d)=c_H+c_L.
\]
Relative to assigning only $H$, parallel execution is worthwhile when $v(1-p_H(d))p_L(d)>c_L$. Relative to assigning only $L$, it is worthwhile when $v(1-p_L(d))p_H(d)>c_H$. Thus, parallel execution trades off a higher completion probability against the waste created by paying both agents, especially when their success events are highly redundant.

Now suppose that before bidding each agent observes its own task-specific draw $\varepsilon_i$ and hence learns whether the task lies within its current capability frontier. To keep the algebra transparent, assume that after observing $\varepsilon_i$ the agent knows whether it can complete the task, so its realized completion probability is
\[
\pi_i(d,\varepsilon_i)=1\{a_i+\varepsilon_i\geq d\}.
\]
Consider a success-contingent procurement contract that pays $b_i$ only if the task is successfully completed. Agent $i$'s expected payoff from such a contract is $\pi_i(d,\varepsilon_i)b_i-c_i$, so a zero-profit bid is $b_i=c_i/\pi_i(d,\varepsilon_i)$, interpreted as $+\infty$ when $\pi_i(d,\varepsilon_i)=0$. The principal therefore awards the task to the cheapest agent among those who know they can solve it, provided the implied payment is below $v$.

Under independence, this yields a simple expression for expected value. Let $p_i(d)$ denote the ex ante probability that agent $i$ declares itself able to solve the task. Since the worse agent is cheaper, it wins whenever it is capable; the better agent is used only when the worse agent cannot solve the task and the better one can. Hence the market allocation has expected value
\[
W_M(d)=p_L(d)(v-c_L)+\left(1-p_L(d)\right)p_H(d)(v-c_H),
\]
expected cost
\[
C_M(d)=p_L(d)c_L+\left(1-p_L(d)\right)p_H(d)c_H,
\]
and completion probability
\[
Q_M(d)=p_L(d)+\left(1-p_L(d)\right)p_H(d).
\]
The market allocation is valuable because it conditions on private, task-specific information rather than on average success rates alone.

To compare the welfare of the different allocation rules more transparently, it is useful to work directly with ex post capability states. Let $\lambda_{\ell h}(d)=\Pr(\pi_L=\ell,\pi_H=h)$ for $\ell,h\in\{0,1\}$, where the first index refers to $L$ and the second to $H$. Thus $\lambda_{10}$ is the probability that only $L$ can solve the task, $\lambda_{01}$ is the probability that only $H$ can solve it, $\lambda_{11}$ is the probability that both can solve it, and $\lambda_{00}$ is the probability that neither can solve it. The expected welfare of the market mechanism can then be written as
\[
W_M(d)=\lambda_{10}(v-c_L)+\lambda_{01}(v-c_H)+\lambda_{11}(v-c_L),
\]
since the market picks the cheapest capable agent in each state and does not allocate the task when neither agent can solve it.

\begin{proposition}\label{prop:market-dominance}
Suppose $v>c_H>c_L>0$. Then the market allocation weakly dominates each non-market alternative. Moreover, the dominance is strict whenever there is positive probability of a state in which the alternative either allocates the task to the wrong agent, pays a redundant agent, or pays for an attempt when no agent can solve the task.
\end{proposition}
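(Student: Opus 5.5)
The approach is to rewrite every non-market rule in the same state-by-state form as $W_M(d)$, using the capability states $\lambda_{\ell h}(d)$, and then compare payoffs state by state. In the deterministic-capability setting, a success event for agent $i$ coincides with $\pi_i=1$. Each fixed rule therefore has realized welfare in each state $(\ell,h)\in\{0,1\}^2$ equal to $v$ times an indicator that some assigned agent is capable, minus the costs of all assigned agents. I will write
\begin{align*}
W_H(d)&=(\lambda_{01}+\lambda_{11})(v-c_H)+(\lambda_{10}+\lambda_{00})(-c_H),\\
W_L(d)&=(\lambda_{10}+\lambda_{11})(v-c_L)+(\lambda_{01}+\lambda_{00})(-c_L),\\
W_P(d)&=(\lambda_{10}+\lambda_{01}+\lambda_{11})v-(c_H+c_L).
\end{align*}
These agree with the earlier formulas because $p_H=\lambda_{01}+\lambda_{11}$, $p_L=\lambda_{10}+\lambda_{11}$, and, under independence, $Q_P=1-\lambda_{00}$. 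The decomposition itself does not need independence, so I will note that the comparison holds for arbitrary joint distributions of $(\pi_L,\pi_H)$.

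Next I compare the rules state by state.
\begin{itemize}
\item In state $00$, the market earns $0$, whereas $H$, $L$ and $P$ earn $-c_H$, $-c_L$ and $-c_H-c_L$, all strictly negative. This is the case of paying for an attempt when no agent can solve the task.
\item In state $10$, the market earns $v-c_L$. Against this, $H$ earns $-c_H$ (the wrong agent), $L$ earns $v-c_L$ (a tie), and $P$ earns $v-c_L-c_H$ (a redundant payment).
\item In state $01$, the market earns $v-c_H$. Against this, $H$ ties, $L$ earns $-c_L$, which is strictly lower because $v>c_H$ (the wrong agent), and $P$ earns $v-c_H-c_L$ (a redundant payment).
\item In state $11$, the market earns $v-c_L$. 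Against this, $H$ earns $v-c_H$, which is strictly lower because $c_H>c_L$ (the wrong, more expensive agent), $L$ ties, and $P$ earns $v-c_H-c_L$ (redundant).
\end{itemize}
In every state the market payoff is at least that of each alternative. Taking expectations with weights $\lambda_{\ell h}$ gives weak dominance.

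For strictness, the expected gap $W_M-W_k$ is a sum of nonnegative per-state gaps, each weighted by $\lambda_{\ell h}$. It is therefore strictly positive exactly when some state with a strictly positive gap has positive probability. The case analysis above shows that the states with strict gaps are precisely those the statement describes: wrong agent, redundant agent, or a futile attempt. The assumptions $v>c_H>c_L>0$ are exactly what make each of these gaps strict.

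The main obstacle is interpretive rather than technical. I need to make precise the phrase ``allocates to the wrong agent'' so that it covers state $11$ under rule $H$, where the expensive agent is used although the cheap one suffices. I also need to confirm that the market's own payment $b_i=c_i$ in capable states is a transfer, so that welfare nets out to $v-c_i$. I will state explicitly that welfare is measured as value minus inference cost, consistent with the definitions of $W_H, W_L, W_P$.
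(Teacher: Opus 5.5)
Your proposal is correct and follows essentially the same route as the paper: you rewrite $W_H$, $W_L$, $W_P$ in terms of the capability-state probabilities $\lambda_{\ell h}$ and compare each with $W_M$, and your per-state gaps are exactly the coefficients in the paper's difference formulas (e.g., $W_M-W_H=\lambda_{10}(v+c_H-c_L)+\lambda_{11}(c_H-c_L)+\lambda_{00}c_H$). Your state-by-state presentation, and your remark that independence is not needed, only make explicit the pointwise dominance that the paper's difference formulas already encode.
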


The intuition is straightforward. Relative to fixed assignment, the market preserves the good states while correcting the bad ones: it uses $L$ when only $L$ can solve, uses $H$ when only $H$ can solve, uses the cheaper agent when both can solve, and avoids wasting cost when neither can solve. Relative to parallel execution, the market achieves the same completion outcome in every state but avoids paying redundant costs. The proof is in Appendix~\ref{app:proof-market-dominance}.

The assumption that agents perfectly observe $\varepsilon_i$ is stronger than what practice requires. In realistic systems, agents only need a signal about likely success or cost that is more informative, or at least not fully redundant, relative to the principal's own signal. If an agent's self-assessment is just a noisy restatement of what the principal already knows, bidding adds little. But if self-assessment contains even modest incremental information about task-specific fit, then bidding can improve allocation and reduce unnecessary parallelism.

\section{MarketBench}\label{sec:benchmark}

MarketBench is a benchmark for AI agents' readiness for market participation. Our initial instantiation uses SWE-bench Lite tasks \citep{jimenez2024swebench}: real GitHub issue-fix pairs with executable test suites and a binary notion of success. Each SWE-bench Lite task is a single GitHub issue drawn from one of roughly a dozen mature open-source Python repositories (\texttt{django}, \texttt{sympy}, \texttt{scikit-learn}, \texttt{sphinx}, etc.); a given repository typically contributes multiple tasks, so per-repository performance priors are well-defined. The novelty of MarketBench lies in what we ask agents to reveal about those tasks before allocation. We study whether models can forecast their own success probabilities and expected resource use well enough to support decentralized routing.

\subsection{Calibration}

The calibration task asks a model to inspect a software-engineering task and forecast whether it can solve that task in one attempt, together with the total token budget it expects to consume. Before the solve attempt, the model is prompted as follows:

\begin{quote}
\small
Estimate the probability that you could complete this task correctly in one attempt. Return JSON only with fields \texttt{p\_success} (0..1), \texttt{estimated\_tokens\_total} (total model tokens for one full solve attempt), \texttt{rationale} (optional).
\end{quote}

We compare the reported \texttt{p\_success} to realized pass/fail outcomes and map token forecasts into expected dollar costs using model-specific blended pricing. Phase~I uses 93 SWE-bench Lite tasks and six frontier models, yielding 558 model-task rows. The elicitation prompt exposes the task ID, title, full problem statement, and acceptance commands, and responses are collected as strict JSON at temperature 0. Realized success labels come from strong-scaffold external SWE-bench runs. By ``external scaffold,'' we mean a stronger SWE-bench execution environment with interactive shell access, direct test execution and feedback, structured file editing, and multi-turn revision.

\subsection{Auction}

The second task family asks whether those self-assessments are good enough to support bidding. We study a simple reserve-price procurement simulation in which each model's bid is mechanically derived from its previously elicited success probability and token-cost estimate rather than from a separate bidding prompt. A given auction only includes one model as a participant, but is incentive-compatible given that it is a second-price auction with an independently drawn reserve.

The model-specific breakeven bid is
\[
b^*=\frac{\text{token cost}+\text{penalty}\times(1-p_{success})}{p_{success}}.
\]
The simulation uses model-specific blended token pricing,\footnotetext{Blended pricing is a single dollars-per-token rate for each model: we divide the run's total dollar cost by its total token count (input and output pooled), so all token categories are valued at that common average.} a failure penalty of \$2, and reserve prices drawn independently from a $\text{Uniform}[0,1]$ distribution, with 100 reserve draws per row and deterministic seeds. Expected profit uses the model's reported \texttt{p\_success}; realized profit evaluates the same mechanically derived bid against the observed pass indicator.

\subsection{Calibration Results}

We begin by evaluating whether models can predict their own task success on 93 SWE-bench Lite tasks. Table~\ref{t:calibration-results} reports mean stated success probability, realized pass rate, Brier skill relative to a base-rate benchmark, and two summary statistics for token forecasting.

\begin{table}[t]
\caption{Calibration on 93 SWE-bench Lite tasks}
\label{t:calibration-results}
\begin{tabular*}{\textwidth}{@{\extracolsep\fill}lccccc}
\toprule
Model & Mean $p$ & Pass rate & Skill & Est.\ toks & Realized toks \\
\midrule
Claude Opus 4.5 & 76.5\% & 80.6\% & +0.060 & 14,774 & 35,820 \\
Gemini 3 Pro Preview & 92.9\% & 80.6\% & -0.111 & 1,801 & 55,969 \\
GPT-5.2 & 63.2\% & 80.6\% & -0.195 & 2,909 & 25,036 \\
GPT-5.2-pro & 66.3\% & 79.6\% & -0.111 & 2,647 & 3,970 \\
Claude Sonnet 4.5 & 75.8\% & 77.4\% & +0.018 & 18,333 & 53,085 \\
GPT-5-mini & 61.4\% & 75.3\% & -0.305 & 2,595 & 28,276 \\
\bottomrule
\end{tabular*}
\note{The pass-rate spread is much narrower than the confidence spread. Only Claude Opus 4.5 and Claude Sonnet 4.5 achieve positive Brier skill relative to a naive base-rate forecast. The final two columns summarize the magnitude of token-estimation error: estimated tokens are the model's forecast, and realized tokens are the actual usage implied by dollar cost and model-specific pricing.}
\end{table}

The main fact from Phase~I is miscalibration. All six models cluster between 75.3\% and 80.6\% realized pass rates, yet their average stated success probabilities range from 61.4\% to 92.9\%. Gemini 3 Pro Preview is sharply overconfident, while GPT-5.2 and GPT-5-mini are underconfident relative to their realized pass rates. To support the illustrative live scaffold case study below, we select a common subset of 50 tasks that all six models attempted; on that slice, GPT-5.2 is the strongest single-model baseline on the external scaffold at 37/50 tasks, while a simple oracle that always picks the best model reaches 42/50. Token self-estimates are also severely understated: the published global median estimated-to-actual token ratio is 0.02. On the same 50-task slice, a later Codex follow-up routes the same GPT-5.2 model family through a different execution path, called \texttt{codex-direct}, with an 1800-second budget; we return to that diagnostic harness comparison below.

We also assessed whether prompt modification can partially address this bottleneck. For a given task, we computed performance on the other SWE-bench tasks and prepended it to the prompt. The performance metrics we included were pass rate, average stated confidence, and typical token underestimation; Appendix~\ref{app:self-knowledge-card} shows a representative example, and Table~\ref{t:self-knowledge-summary} reports the aggregate before-and-after calibration summary. On the full six-model rerun, the mean Brier score improves from 0.1835 to 0.1693 and ECE improves from 0.1065 to 0.0616; token forecasts also become less severely understated. Even a simple self-history card therefore makes the forecasts more useful. 

\subsection{Reserve-Price Auction Results}

We next ask whether these self-assessments, though noisy, are nevertheless sufficient for profitable bidding in the reserve-price auction described above. Table~\ref{t:reserve-auction-results} summarizes simulated outcomes on the same 93-task calibration set.

\begin{table}[t]
\caption{Single-model reserve-price auction outcomes}
\label{t:reserve-auction-results}
\begin{tabular*}{\textwidth}{@{\extracolsep\fill}lccccc}
\toprule
Model & $N$ & Win rate & Exp. profit & Realized profit & Oracle profit \\
\midrule
Claude Opus 4.5 & 93 & 28.2\% & \$0.077 & \$0.080 & \$0.260 \\
Claude Sonnet 4.5 & 93 & 29.3\% & \$0.062 & \$0.064 & \$0.288 \\
Gemini 3 Pro Preview & 93 & 84.6\% & \$0.353 & \$0.303 & \$0.391 \\
GPT-5-mini & 93 & 21.6\% & \$0.054 & \$0.050 & \$0.375 \\
GPT-5.2 & 93 & 5.4\% & \$0.005 & \$0.006 & \$0.385 \\
GPT-5.2-pro & 93 & 5.3\% & \$0.007 & \$0.007 & \$0.227 \\
\bottomrule
\end{tabular*}
\note{Profits are reported in dollars per task under the reserve-auction simulation. The oracle column assumes perfect knowledge of which tasks are actually solvable and abstention on the rest.}
\end{table}

Two patterns stand out. First, all models earn less than the oracle benchmark, often by a wide margin. GPT-5.2, for example, earns roughly \$0.006 per task in realized profit versus \$0.385 for its oracle counterpart. Second, poor calibration distorts allocation even when average profits remain positive. Gemini wins 84.6\% of simulated auctions and earns the largest realized profit, but this dominance is driven by aggressive bidding despite the weak calibration signal documented above. The resulting auction is therefore far from the full-information outcome. 

The self-knowledge follow-up from the preceding subsection partly explains why. It improves the underlying forecasts and shifts allocation in the predicted direction, but the aggregate auction numbers move only at the margin (mean realized profit \$0.085 to \$0.082, oracle gap 0.24 to 0.23). Prompt-level self-knowledge is enough to repair \emph{who} wins; it is not yet enough to repair \emph{how much} the principal recovers. This section does not report a full reserve-distribution or payment-rule sensitivity sweep. Later informed competitive auctions suggest that prompt framing changed bidding behavior more than payment-rule variation changed allocation accuracy. The deeper bottleneck still appears to be weak task-by-task self-assessment.

\section{Illustrative Scaffold Experiment}\label{sec:scaffold}

\subsection{Setup}

Beyond the benchmark, we built a market-inspired scaffold to test whether these model submitted bids improve allocation in a multi-agent setting. We call it the \emph{live scaffold}, contrasted with the \emph{external scaffold} used for benchmark labels and single-model baselines. The live scaffold treats each SWE-bench issue as a single task and verifies a submitted patch at the end. Each attempt is one shot --- a single patch, with no interactive shell, no test-feedback loop, and no within-attempt revision --- but each task gets up to two attempts. On a failed first attempt, a hard worker-exclusion rule forces the second attempt to a different model. The live scaffold is therefore a case study in routing under imperfect self-assessment, not a literal implementation of the success-contingent procurement contract analyzed in Section~\ref{sec:model}.

Our scaffold is operator-run rather than a clean procurement auction with outside suppliers. The operator chooses assignments, pays model-run costs directly, and uses a score as an internal routing heuristic. In this way, it is not a real market but market-like. That said, we also envision future variations where third-party agents bid in a decentralized manner and their owners bear the token costs incurred while performing the task. 

The scaffolding works as follows. Each available worker submits an ask, a self-assessed probability of success, and an expected completion time, although the current implementation does not yet use the time field in the routing score (it is used only as a tie-breaker). We model the worker as expecting to receive its ask minus its cost upon winning and successfully executing the task, and paying a penalty if it fails.

The operator pays execution costs and decides whether to assign a task and to whom based on a score. For bidder $i$,
\[
\text{Score}_i = p_i \times (\text{Utility} - \text{Ask}_i) - (1-p_i) \times \text{Penalty}_i - E[\text{Cost}_i].
\]
where $\text{Utility}$ is the operator's per-task utility (a scaffold-level constant), $\text{Ask}_i$ is bidder $i$'s requested payment on success, and $\text{Penalty}_i$ is the scaffold's explicit loss when bidder $i$'s attempt fails.\footnote{In the implementation, the failure penalty is itself a function of the bidder's stated $p_i$: $\text{Penalty}_i = \rho \cdot \text{Utility} \cdot (0.5 + p_i)$, with $\rho$ a fixed scaffold parameter. Overconfident bidders therefore face a heavier expected loss on failure.}

All bids with positive scores remain eligible, and the highest-scoring available worker receives the task. If that worker fails, the engine considers the next-highest eligible bid. Tasks are run with the same six frontier models used in the benchmark, a utility of 90 internal credits for every task, an explicit failure-penalty settlement rule, a 90-second bidding window, a 900-second execution limit, and at most two attempts per task. Workers are required to submit bids on tasks, and a worker that fails a task is excluded from retrying that same task.

\subsection{Scaffold Performance and Extensions}

We evaluate the live scaffold on the common 50-task subset introduced above. Table~\ref{t:market-vs-solo-results} reports the published market-versus-solo comparison, which is best read as a scaffold-level comparison rather than as a clean mechanism test. ``Market (our scaffold)'' is the six-worker live-routing condition described above. ``Solo GPT-5.2 (our scaffold)'' uses the same internal scaffold but always assigns GPT-5.2, including on retry. ``Solo GPT-5.2 (external scaffold)'' is the stronger SWE-bench baseline from Section~\ref{sec:benchmark}, and the oracle ceiling asks what pass rate an omniscient router could reach on the external scaffold by always picking the right model for each task.

\begin{table}[t]
\caption{Illustrative live scaffold experiment results on 50 common tasks}
\label{t:market-vs-solo-results}
\begin{tabular*}{\textwidth}{@{\extracolsep\fill}lcc}
\toprule
Execution paradigm & Pass rate & Passes \\
\midrule
Market (our scaffold) & 58.0\% & 29/50 \\
Solo GPT-5.2 (our scaffold) & 48.0\% & 24/50 \\
Solo GPT-5.2 (external scaffold) & 74.0\% & 37/50 \\
Oracle ceiling (external scaffold) & 84.0\% & 42/50 \\
\bottomrule
\end{tabular*}
\note{The same-scaffold comparison keeps the task slice, verifier, timeout, and two-attempt cap fixed, but not the worker pool: the market condition uses six workers and diverse-model retry, whereas the solo condition always uses GPT-5.2.}
\end{table}

The market-versus-solo comparison keeps the task slice and run policy fixed, but it does not keep the worker pool fixed. Specifically, both conditions use the same 50 tasks, verifier, 900-second execution limit, and cap of two attempts per task. But the market condition has access to a six-model worker pool and can route a retry to a different model, whereas the solo condition always uses GPT-5.2. Under those conditions, the market outperforms the same-scaffold solo baseline by 5 tasks, or 10 percentage points. 18 tasks are solved by both systems, 11 by the market only, 6 by solo GPT-5.2 only, and 15 by neither. That pattern is consistent with gains from model diversity and retry, but the p-value for the difference in means is $0.3$, so we cannot reject the hypothesis that the two systems are equivalent on this slice. We therefore treat this comparison as suggestive --- a diverse worker pool plausibly helps on our scaffold --- and rest the paper's main claims on the calibration evidence in Section~\ref{sec:benchmark} rather than on the scaffold-level pass-rate gap.

Table~\ref{t:market-vs-solo-results} also demonstrates the underperformance of the live scaffold relative to external scaffolds: the same GPT-5.2 loses 26 percentage points (13 tasks, 74\% vs.\ 48\%) when moved from the external scaffold to ours. The reason for this difference is that the external scaffold gives the model an interactive shell, direct test execution and feedback, multi-turn iteration, and structured file-editing tools; the live scaffold offers none of these and limits each worker to a single patch (raw diff or \texttt{BEGIN\_FILE} block) per attempt. The market recovers about 10 percentage points of that gap through model diversity; the rest would require scaffold upgrades, not better bidding.

The market run also used more compute than solo GPT-5.2, because multiple workers inspect each task before clearing and some tasks take a second attempt: about 5.82M total tokens for the market run versus 4.37M for the solo baseline (200.8K versus 182.2K tokens per successful pass).

\begin{table}[t]
\caption{Alternative Scaffold Specifications}
\label{t:scaffold-followups}
\begin{tabular*}{\textwidth}{@{\extracolsep\fill}lcc}
\toprule
Follow-up & Pass rate & Passes \\
\midrule
Matched centralized router & 54.0\% & 27/50 \\
Matched market rerun & 46.0\% & 23/50 \\
Market with self-knowledge bidding prior & 56.0\% & 28/50 \\
Codex + GPT-5.2 (1800s budget) & 70.0\% & 35/50 \\
\bottomrule
\end{tabular*}
\note{All rows use the common 50-task slice. The matched centralized-router and matched market rerun keep the same six workers, verifier, and 900-second execution limit. The self-knowledge market row changes only the private bidding context by adding a held-out calibration prior and bidding guardrails before bidding. The Codex row changes both execution path and budget, so it is a diagnostic rather than an apples-to-apples replacement for the published benchmark.}
\end{table}

There are two explanations for why our market scaffold can beat a solo model: the market-clearing rule, and the diverse worker pool.

To separate these explanations, we ran a follow-up that keeps everything identical, but replaces the market-clearing rule with a centralized router. The router is a single LLM call (GPT-5.2-pro) that reads the task descriptions, the available workers with their model identifiers, expected cost hints, retry-exclusion constraints, and other available context such as discussion notes and worker reputation summaries, and then directly assigns a worker to each task. The assigned worker still executes the task through the same path as in the market condition; the only difference is who chooses the assignment. Under those matched conditions, the centralized router reaches 27/50 while the market reaches 23/50. The gain from the published market comparison therefore appears to come primarily from model diversity rather than from the present bidding rule. This shouldn't be surprising, since models perform poorly in self-assessment as measured by MarketBench.\footnotetext{The main driver of the market's drop in the matched rerun is Gemini. Gemini bids aggressively, so the market selects it more often. But its realized performance is weak. This is a bid-calibration problem: an overconfident worker crowds out better-suited alternatives.}

It is worth flagging directly that the ``Market (our scaffold)'' row in Table~\ref{t:market-vs-solo-results} (29/50) and the ``Matched market rerun'' row in Table~\ref{t:scaffold-followups} (23/50) are nominally the same condition, run twice. The 6-task gap between them is itself informative: at this sample size, run-to-run variation from stochastic bids, retry ordering, and execution-path noise is on the same order as the differences between scaffolds we are trying to measure. Comparisons across rows of Table~\ref{t:scaffold-followups} should be read with that noise floor in mind, and any single 50-task scaffold evaluation --- ours included --- should be treated as imprecise on its own.

If overconfident bidding is the bottleneck, can better priors help? To test this, we ran a second follow-up that keeps the market-clearing rule but changes only the private bidding context: each worker receives a held-out calibration prior before bidding, summarizing its own historical pass rate and overconfidence tendency. The hard-prior rule (for the related calibration card see Appendix~\ref{app:self-knowledge-card}) starts from the held-out pass rate minus any historical overconfidence gap, then asks the worker to raise \texttt{p\_success} only when the current task gives direct evidence that it is easier than the held-out average. Under that intervention, the market reaches 28/50---up from 23/50 in the matched rerun and slightly above the centralized router at 27/50. This is suggestive that better self-knowledge would yield even greater performance benefits.

\paragraph{Sensitivity to execution harness and time budget.} A separate question is how much the live scaffold's weaker execution environment constrains all of these results. To probe this, we ran a diagnostic on the same 50-task slice using GPT-5.2 through a different execution route, called the \texttt{codex-direct} worker path, with the per-task limit doubled from 900 to 1800 seconds. Under those changed conditions, the follow-up reaches 35/50 passes, compared to 23--29/50 for our market scaffold and 24/50 for solo GPT-5.2 in our scaffold. The cost picture is very different, however. The \texttt{codex-direct} run consumes 321.3 million total tokens, versus 4.37 million for the solo GPT-5.2 run and 5.82 million for the market run --- roughly 55--75$\times$ more compute for, at most, a 12-task improvement over our market scaffold and an 11-task improvement over solo GPT-5.2. 

Two observations follow. First, when comparing models or scaffolds, the execution path is first order: routing and bidding choices that move pass rates by a few tasks are dwarfed by execution-environment effects. Second, the steep token cost of those gains is itself a reason to care about market-style coordination: if substantial pass-rate improvements available at the frontier come with order-of-magnitude compute increases, then mechanisms that route only the tasks that need that budget to the expensive path become economically important rather than cosmetic.

\section{Discussion}\label{sec:discussion}

As the number and variety of AI agents being used in production increases exponentially, identifying ways for them to efficiently and successfully collaborate on hard problems becomes a key bottleneck. For human agents, markets emerged as the dominant mechanism for this kind of coordination, and there are reasons to expect a similar role for AI agents.

Yet current agents do not know enough about their own production functions. They can solve complex tasks, but they cannot reliably say in advance which tasks they can solve, how likely they are to succeed, and what the attempt will cost. But that information is exactly what a market is supposed to aggregate. If bids do not reflect informed private information, the price system cannot do its job. Even though a brief summary of a model's own past behavior materially improves calibration, it is still far from the theoretical maximum.

These observations suggest that the optimal solution for coordinating agents will likely be a combination of markets and AI. Markets are useful when knowledge is local, dispersed, and hard to centralize. That description fits agentic work rather well. Different models have different strengths, different tool use, different failure modes, and different costs. But with imperfect self-knowledge and behavior, agents may not be reliable in their own assessments of their abilities. The role of AI in such a system is to augment the market, scoring agent bids based on more centralized sources of information, especially about the trustworthiness of an agent's bids.

An analogy to other types of auctions is useful. In procurement, buyers often care about more than price. They care about quality, speed, reliability, and other attributes of performance. Scoring auctions handle that problem directly: the winner is the bidder offering the best combination of price and non-price characteristics \citep{che1993design}. Similarly, ad auctions combine this idea with a statistical model of relevance \citep{varian2007position}.

A mature market for AI agents is likely to work similarly. An agent that is cheap but unreliable should have a low quality score. In contrast, an agent that is more expensive but bids accurately may be the efficient choice. When multiple agents bid, a scoring function should appropriately weigh the bids to determine the winner. Concretely, the quality score in such a market could combine an agent's held-out calibration record on a benchmark like MarketBench (how well its stated $p_{\text{success}}$ has tracked realized outcomes), domain-specific reputation aggregated across past tasks of similar type, and, where verification is possible, the rate at which a third-party verifier accepted its outputs. These are exactly the inputs that a centralized router currently uses ad hoc; making them explicit and tradeable across operators is what would turn the existing scaffolds into a market.

Recent evaluations also show that measured agent performance can keep improving at much larger inference budgets than standard setups typically allow \citep{aisi2026inference-scaling-cyber}, and our own \texttt{codex-direct} diagnostic in Section~\ref{sec:scaffold} points the same way: doubling the per-task budget and changing the execution path adds substantial passes at the price of roughly two orders of magnitude more compute. If pass-rate gains at the frontier come from spending more compute on a per-task basis, then a single scalar bid that quotes one expected cost is the wrong object. The right object is closer to a production schedule: a mapping from token budget to expected success probability, expected wall-clock time, and a sketch of how the agent would allocate that compute across search, tool use, and revision. A market that elicits such schedules can in principle let the principal buy the right point on each agent's cost-quality frontier, instead of being locked into whichever budget the agent happened to assume when it bid. We do not develop this here, but we view it as the natural next step once self-assessment of scalar success is itself reliable enough to build on.

The agenda that we propose has the following next steps. First, self-assessment should become a target of training and evaluation in its own right. Calibration, abstention, and budget discipline in bidding are core capabilities for decentralized coordination. Second, richer market institutions will need to arise, including reputation, escrow, and other mechanisms. Third, this benchmark should be extended beyond software engineering into other complex settings. 

\bibliography{\bib}

\appendix
\setcounter{table}{0}
\renewcommand{\thetable}{A\arabic{table}}

\section{Self-Knowledge Card for Direct Calibration}\label{app:self-knowledge-card}

The self-knowledge calibration follow-up prepended a short held-out history card before the standard direct calibration prompt. The exact numbers varied by model and by the held-out tasks available for that model.

SWE-bench instances are drawn from real open-source repositories, not synthetic toy problems. Task identifiers follow the usual convention: for example, \texttt{sympy\_\_sympy-17630} denotes an issue from the \texttt{sympy/sympy} repository. The main experiments use a fixed 93-task subset of MarketBench; that subset contains multiple tasks from some repositories, rather than a single task per repository. When we form the self-knowledge card for a given evaluation task, the held-out statistics exclude the current task. The figure ``across 92 held-out tasks'' aggregates all other tasks in this setup for the same model, spanning every repository represented in the subset. The line ``on prior tasks from \texttt{sympy/sympy} (21 held-out tasks)'' is the same-repository slice of that pool: among those 92 tasks, the ones that also come from \texttt{sympy/sympy}. The repository-specific line was included only when there was enough held-out data for that repository.

The template for the card was:

\begin{quote}
\small
Use the historical self-knowledge summary below as a prior for this model.

These statistics come from held-out MarketBench tasks for this same model and exclude the current task.

Historical self-knowledge summary:
\begin{itemize}[leftmargin=1.5em]
\item Across \texttt{\{N\_held\}}\ held-out tasks, your pass rate was \texttt{\{pass\_rate\}}.
\item Your mean previously stated success probability was \texttt{\{mean\_p\}}; you were historically \texttt{\{over\}} or \texttt{\{under\}}confident by \texttt{\{gap\}}.
\item Your actual solve-token usage was typically \texttt{\{ratio\}}x your estimate.
\item \emph{(Included when $\geq$5 held-out tasks from the same repository were available.)} On prior tasks from \texttt{\{repo\}} (\texttt{\{N\_repo\}}\ held-out tasks), your pass rate was \texttt{\{repo\_pass\_rate\}} and your mean stated success probability was \texttt{\{repo\_mean\_p\}}.
\end{itemize}

Start from these historical tendencies, then update using the specific evidence in the current task.

Be willing to move away from the historical average when the task details clearly support it.
\end{quote}

At runtime, placeholders were filled per model using leave-one-out statistics over the 93-task calibration set: \texttt{\{N\_held\}} was typically 92 (93 minus the current task, adjusted for any missing rows for that model). Because SWE-bench Lite tasks cluster by source repository, the optional per-repository line gave the model a more task-relevant prior when enough same-repo data existed.

As a concrete example, the card saved for Claude Opus~4.5 on the \texttt{sympy\_\_sympy-17630} instance had \texttt{\{N\_held\}}=92, \texttt{\{pass\_rate\}}=81.5\%, \texttt{\{mean\_p\}}=76.6\% (underconfident by 4.9\%), \texttt{\{ratio\}}=2.4x, and the per-repository line covered 21 held-out \texttt{sympy/sympy} tasks (pass rate 85.7\%, mean stated probability 68.0\%).

This card was then followed by the same direct calibration prompt shown in Section~\ref{sec:benchmark}. The live-scaffold bidding follow-up used the same held-out history idea, but converted it into the stricter hard-prior bid rule described in Section~\ref{sec:scaffold}.

\begin{table}[t]
\caption{Aggregate six-model calibration summary with the self-knowledge card}
\label{t:self-knowledge-summary}
\begin{tabular*}{\textwidth}{@{\extracolsep\fill}lcccc}
\toprule
Prompt condition & Mean $p$ & Pass rate & Skill & Est./actual toks \\
\midrule
Direct calibration prompt & 72.7\% & 79.0\% & $-0.107$ & 0.1929 \\
Self-knowledge card + direct prompt & 80.6\% & 79.0\% & $-0.022$ & 0.2501 \\
\bottomrule
\end{tabular*}
\note{Both rows aggregate the same six-model, 93-task setup rather than reporting a single model. ``Skill'' is aggregate Brier skill relative to a base-rate benchmark. ``Est./actual toks'' is the median estimated-to-actual token ratio, so larger values indicate less severe underestimation. The corresponding ECE values, reported in the text, improve from 0.1065 to 0.0616.}
\end{table}

\section{Proof of Proposition~\ref{prop:market-dominance}}\label{app:proof-market-dominance}

\begin{proof}
Under the three alternative mechanisms, expected welfare is
\[
W_H(d)=\lambda_{10}(-c_H)+\lambda_{01}(v-c_H)+\lambda_{11}(v-c_H)+\lambda_{00}(-c_H),
\]
\[
W_L(d)=\lambda_{10}(v-c_L)+\lambda_{01}(-c_L)+\lambda_{11}(v-c_L)+\lambda_{00}(-c_L),
\]
and
\[
W_P(d)=\left(\lambda_{10}+\lambda_{01}+\lambda_{11}\right)v-c_H-c_L.
\]
Subtracting from market welfare yields
\[
W_M(d)-W_H(d)=\lambda_{10}(v+c_H-c_L)+\lambda_{11}(c_H-c_L)+\lambda_{00}c_H,
\]
\[
W_M(d)-W_L(d)=\lambda_{01}(v+c_L-c_H)+\lambda_{00}c_L,
\]
and
\[
W_M(d)-W_P(d)=\lambda_{10}c_H+\lambda_{01}c_L+\lambda_{11}c_H+\lambda_{00}(c_H+c_L).
\]
Each expression is weakly positive when $v>c_H>c_L>0$, and it is strictly positive whenever the corresponding alternative makes a mistake with positive probability.
\end{proof}

\end{document}